\def\eqref#1{equation~\ref{#1}}
\def\1{\bm{1}}
\DeclareMathAlphabet{\mathsfit}{\encodingdefault}{\sfdefault}{m}{sl}
\SetMathAlphabet{\mathsfit}{bold}{\encodingdefault}{\sfdefault}{bx}{n}
\newtheorem{lemma}{Lemma}
\newtheorem{theorem}{Theorem}
\newtheorem{corollary}{Corollary}
\newtheorem{assumption}{Assumption}
\newcommand{\norm}[1]{\left\lVert #1 \right\rVert}
\title{Inverse Reinforcement Learning from Non-Stationary Learning Agents}
\author{Kavinayan P. Sivakumar, Yi Shen, Zachary Bell, Scott Nivison, Boyuan Chen and Michael M. Zavlanos \thanks{This work is supported in part by AFOSR under award \#FA9550-19-1-
0169 and by NSF under award CNS-1932011.}
\thanks{K. Sivakumar is with the Department of Electrical and Computer Engineering, Duke University, Durham, NC 27708, USA {\tt\small kps29@duke.edu}}%
\thanks{Y. Shen and M. Zavlanos are with the Department of Mechanical Engineering \& Material Science, Duke University, Durham, NC 27708, USA {\tt\small yi.shen478@duke.edu, michael.zavlanos@duke.edu}}%
\thanks{Z. Bell and S. Nivison are with the US Air Force, AFRL Division, USA {\tt\small zachary.bell.10@us.af.mil, scott.nivison@us.af.mil}}
}
\begin{document}

\maketitle

\begin{abstract}
In this paper, we study an inverse reinforcement learning problem that involves learning the reward function of a learning agent using trajectory data collected while this agent is learning its optimal policy. To address this problem, we propose an inverse reinforcement learning method that allows us to estimate the policy parameters of the learning agent which can then be used to estimate its reward function. Our method relies on a new variant of the behavior cloning algorithm, which we call bundle behavior cloning, and uses a small number of trajectories generated by the learning agent's policy at different points in time to learn a set of policies that match the distribution of actions observed in the sampled trajectories. We then use the cloned policies to train a neural network model that estimates the reward function of the learning agent. We provide a theoretical analysis to show a complexity result on bound guarantees for our method that beats standard behavior cloning as well as numerical experiments for a reinforcement learning problem that validate the proposed method.
\end{abstract}

\section{Introduction} \label{introduction}

Understanding the intentions of autonomous agents \cite{machinetheoryofmind} has important implications in a variety of applications, ranging from cyber-physical systems \cite{marlselectiveoverviewtheories} to strategic games \cite{ddpgstarcrafttransfermicro},  \cite{multiobjcoopcoevmicrortsgames}. If an agent follows a fixed policy, its intentions remain consistent and can be learned by sampling a sufficiently large number of state-action pairs \cite{algs4irlngrussell}, \cite{algperspectiveimitationlearning}. However, in practice, sampling enough trajectories to learn a fixed optimal policy can be time consuming. Moreover, from an observer's perspective, it is not always easy to determine whether the observed agent's policy is fixed or not. Therefore, it is important to be able to estimate an agent's intentions while this agent is interacting with its environment to learn its optimal policy and before the policy has converged. 

Doing so using only state-action data is not straightforward. A common way to predict the intentions of an agent in deep reinforcement learning is assuming the policy weights are known. If the policy weights are known during learning, the distribution of agent actions at any given state can also be estimated \cite{surveymas}. Nevertheless, assuming that the structure of the agent's policy is known is not ideal in practice \cite{intrusiondetectionmultiagent}. Inverse reinforcement learning (IRL) instead aims to understand an agent's intentions by learning their reward function \cite{ogirlpaper}. A common approach in existing IRL literature is to use the maximum entropy to select a distribution of actions at a state that matches an expected distribution given observations of state-action pairs. By maximizing the entropy, one can account for all the sampled distributions of state-action pairs at highly visited states, thus making it more reliable to recover the reward function as no additional assumptions are made on the behavior at less visited states \cite{maxentropyirlbcexpertexamplesdazi}, \cite{maxentropypaper2singh}.

An important common assumption in the IRL literature discussed above is that the trajectories used to learn the reward functions are generated from a stationary expert policy. This assumption does not hold when trajectory data are collected as the agent interacts with its environment to learn its optimal policy. In this case, existing IRL methods are not straightforward to apply. To address this challenge, \cite{logelirl} proposes a way to learn an agent's reward function from data generated by this agent's non-expert policy while this policy is being updated and assuming that policy updates are done using stochastic gradient descent. However, the theoretical analysis of the approach in \cite{logelirl} requires that the reward function of the learner agent is approximated by a set of linear feature functions. In practice, selecting an expressive enough set of feature functions is challenging. For example, even in the case of radial basis functions that are commonly used as feature functions, choosing proper centers and bandwidth parameters is not straightforward  \cite{radialbasisfunctions}. In IRL, these parameters must be chosen to ensure that each state action pair has a unique feature and that no state action pairs are favored over others. If these criteria are not met, the learned reward function may be skewed towards the state action pairs that are favored more. 

Motivated by the approach in \cite{logelirl}, in this paper we propose an IRL method to learn the reward function of a learning agent as it learns its optimal policy. As in \cite{logelirl}, we assume that the agent updates its policy using stochastic gradient descent, but we do not assume that the agent's reward function is approximated by linear feature functions; it can be approximated by any nonlinear function, e.g., a neural network. Since the agent does not share its policy parameters explicitly, the challenge lies in estimating the agent's policy parameters as they are being updated during learning. To do so, we propose a new variant of behavior cloning, which we term bundle behavior cloning, that uses a small number of trajectories generated by the learner agent's policy at different points in time to learn a set of policies for this agent that match the distribution of actions observed in the sampled trajectories. These cloned policies are then used to train a neural network model that estimates the reward function of the learning agent. We provide error bounds of the policies learned by bundle behavior cloning and standard behavior cloning that allow us to choose an appropriate bundle size which validates bundle behavior cloning as a superior method for the problem we discuss. Moreover, we present numerical experiments on simple motion planning problems that show that the proposed approach is effective in learning the reward function of the learning agent. In comparison, empirical results presented in \cite{logelirl} assume that the agent's policy parameters are known so that behavior cloning is not needed to estimate them.

The rest of the paper is organized as follows. In Section \ref{problemdefinition}, we formulate the proposed IRL problem and introduce some preliminaries. In Section \ref{method}, we develop our proposed algorithm. In section \ref{complexityanalysis} we present theoretical results that support our proposed method. Finally, in Section \ref{experiments}, we numerically validate our method on simple motion planning problems.

\section{Problem Definition}
\label{problemdefinition}
Consider an agent with state space $\mathcal{S}$ and action space $\mathcal{A}$. We model this system as a finite Markov Decision Process (MDP), defined as $\mathcal{M} = (\mathcal{S}, \mathcal{A}, P, \gamma, R)$, where $P(S_{t+1}=s_{t+1} \vert S_t=s_t, A_{t}=a_t)$ is the transition function defined over the agent's state $s_t \in \mathcal{S}$ and action $a_t \in \mathcal{A}$, $\gamma \in (0, 1]$ is the discount factor, and $R(s_t, a_t, s_{t+1})$ is the reward received when the agent transitions from state $s_t$ to state $s_{t+1}$ using the action $a_t$. Let also $\pi_{\theta}(a_t \vert s_t) \rightarrow [0, 1]$, denote the agent policy, which is the probability of choosing an action $a_t$ at state $s_t$, and is parameterized by the policy parameter $\theta$. The objective of the agent is to maximize its accumulated reward, i.e.,
\begin{align}
    \underset{\theta}{\text{max }} J(\theta) = \underset{s_0 \sim \rho^u}{\mathbb{E}} [\sum_{t=0}^T \gamma^t R(s_t, a_t, s_{t+1})],
    \label{eq:objfunctionabcd}
\end{align}

\noindent where $\rho^u$ is the initial state distribution of the agent. In what follows, we assume that the agent does not share its policy parameters or rewards. Then, in this paper, we address the following problem:

\noindent \textbf{Problem 1:} (Inverse reinforcement learning from a learning agent) Learn the reward function of a learning agent using only trajectories of state action pairs $\tau = [(s_0, a_0), ..., (s_T, a_T)]$ generated as the agent interacts with its environment to learn its optimal policy $\pi_\theta$ that solves (\ref{eq:objfunctionabcd}).

To solve Problem 1 we make the following assumption.
\begin{assumption} \label{sgdassumption}
The agent updates its policy using the stochastic gradient descent (SGD) update $\theta_{t+1} = \theta_t + \alpha \nabla_\theta J$.
\end{assumption}
We note that many popular policy gradient methods use stochastic gradient descent, e.g., REINFORCE, Actor-Critic, etc. 
Specifically, in this paper, we use the REINFORCE algorithm to update the policy parameters of agent $i$ as
\begin{align}\label{eq:reinforceforward}
    \theta_{t+1} = \theta_t + \alpha \gamma^t R(s_{t+1}) \nabla \text{ln } \pi(a_t \vert s_t, \theta),
\end{align}

\noindent where $\theta$ is the policy parameter of the agent, $\alpha$ is the learning rate, and we denote $R(s_{t+1}) = R(s_t, a_t, s_{t+1})$. We call this the forward learning problem and make the following assumption on the learning rate. 
\begin{assumption} \label{learningrateassumption}
The learning rate of the agent $\alpha$ is known and is sufficiently small.
\end{assumption}

Finally, we make the following assumption on the information that is available in order to learn the reward of the agent.
\begin{assumption} \label{globalstateassumption}
All states and actions of the agent in the environment can be observed.
\end{assumption}

The main challenge with solving Problem $1$ is that without the policy parameters of the agent at every timestep, it is difficult to recover $R(s_{t+1})$. In the next section, we detail how we can recover $R(s_{t+1})$ using IRL and a new variant of behavior cloning that we propose, that we call bundle behavior cloning, to estimate the agent's policy. 

\section{Method}
\label{method}
Assuming it is known that the learning agent uses the update (\ref{eq:reinforceforward}) to update its policy parameters, to recover the reward at a state $s_{t+1}$ we need to know the policy parameters $\theta_t$ and $\theta_{t+1}$, as well as $\nabla \text{ln } \pi(a_t \vert s_t, \theta)$. As the policy structure, e.g., the structure of the policy neural network, and policy weights of the agent are considered unknown, it is not straightforward to estimate these policy parameters, or the gradient of the objective function. Here, we propose a novel variant of behavior cloning to learn these quantities, using only trajectory data collected during learning.

Behavior cloning is a supervised learning method used to learn a policy $\pi_\theta$ from a set of trajectories $\{\tau\}$ sampled from an expert policy. A number of applications ranging from autonomous driving \cite{bcautonomousdrivingcnn}, video games \cite{bcatarigamesvae}, and traffic control \cite{bctrafficcontrolops} have relied on behavior cloning to learn the desired expert policies. Generally, the assumption in these works is that the expert policy used to generate the data is stationary. However, if the expert policy changes as in this paper, it is not easy to assign trajectory data $\{\tau\}$ consisting of state and action pairs $\tau = \{(s_0, a_0), (s_1, a_1), ..., (s_T, a_T)\}$, to specific policies $\pi_{\theta}$. We address this challenge by instead bundling together trajectories and learning a single cloned policy for each bundle. Given Assumption \ref{learningrateassumption}, if the learning rate $\alpha$ is small enough, then consecutive policy parameters $\theta_{t+1}$ and $\theta_t$ will be close to each other. Therefore, for an appropriate bundle size, it is reasonable to expect that the trajectories in each bundle are generated by approximately the same policy. We call the proposed method Bundle Behavior Cloning, which we describe below.

We denote the cloned policy corresponding to the $k$th bundle $b_k$ in the set $[b_1,...,b_M]$ by $\pi^k_{\psi}$ where $\psi$ is the policy parameter. We assume that every bundle contains $B$ trajectories. Given a total number of training episodes $E$ from forward learning, the total number of bundles is defined as $M = E-B+1$ if a sliding window is used to construct the bundles, or $M = E / B$ rounded up to the nearest integer for non-overlapping sets of episodes. Here, the size of each bundle $B$ constitutes a hyperparameter. For each bundle $b_k$ consisting of $B$ trajectories, our goal is to generate a distribution of the agent's actions for each state denoted by $\rho_{b_k}(s)$ that approximates the distribution of the agent's actions in the policy we wish to clone. Then, to train the cloned policy we rely on the Mean Squared Error (MSE) loss
\begin{align}
    \text{loss}(\psi_k) = \sum_{s \in \mathcal{S}} \text{MSE}(\pi^k_{\psi}, \rho_{b_k}(s)).
\label{eq:bundlebc}
\end{align}
The proposed bundle behavior cloning method is outlined in Algorithm 1. Specifically, line 5 in Algorithm \ref{bbcalg} adds the loss over all the states in the state space so that the $k$th cloned policy parameter $\psi_k$ can accurately imitate the true agent policy parameters $\theta$ during the episodes. Fig. \ref{fig:bbcvisual} illustrates Bundle Behavior Cloning visually.
\begin{algorithm} \label{bbcalg}
\SetKwInOut{Input}{input}
\SetKwInOut{Output}{output}
\Input{List of trajectories of state-action pairs $\tau$ from entire learning, size of each bundle $B$, mean-squared error loss function $\text{MSE}$, uninitialized policy $\tilde{\pi}_\psi$}
\Output{Set of cloned policies \{$\tilde{\pi}^1_\psi$, $\tilde{\pi}^2_\psi$, ... $\tilde{\pi}^M_\psi$\}}
\BlankLine
\For{bundle index $k$ from $[1:M]$}
{
Initialize $\tilde{\pi}^k_\psi = \tilde{\pi}^{k-1}_\psi$\ or $\tilde{\pi}^k_\psi = \tilde{\pi}_\psi$ if $k=1$\;
Define $b_k$ as the trajectory set of episode indices from $(k-1)B$ (inclusive) to $kB$ (exclusive)\;
Calculate sampled distribution $\rho_{b_k}(s)$ of agent actions at state $s$ for bundle of trajectories $b_k$ for each $s \in \mathcal{S}$\;
Calculate the loss using (\ref{eq:bundlebc})\;
Update policy $\tilde{\pi}^k_\psi$ using backpropagation;
}

\caption{Bundle Behavior Cloning}
\end{algorithm}
\begin{figure}[h]
    \centering
\includegraphics[scale=0.25]{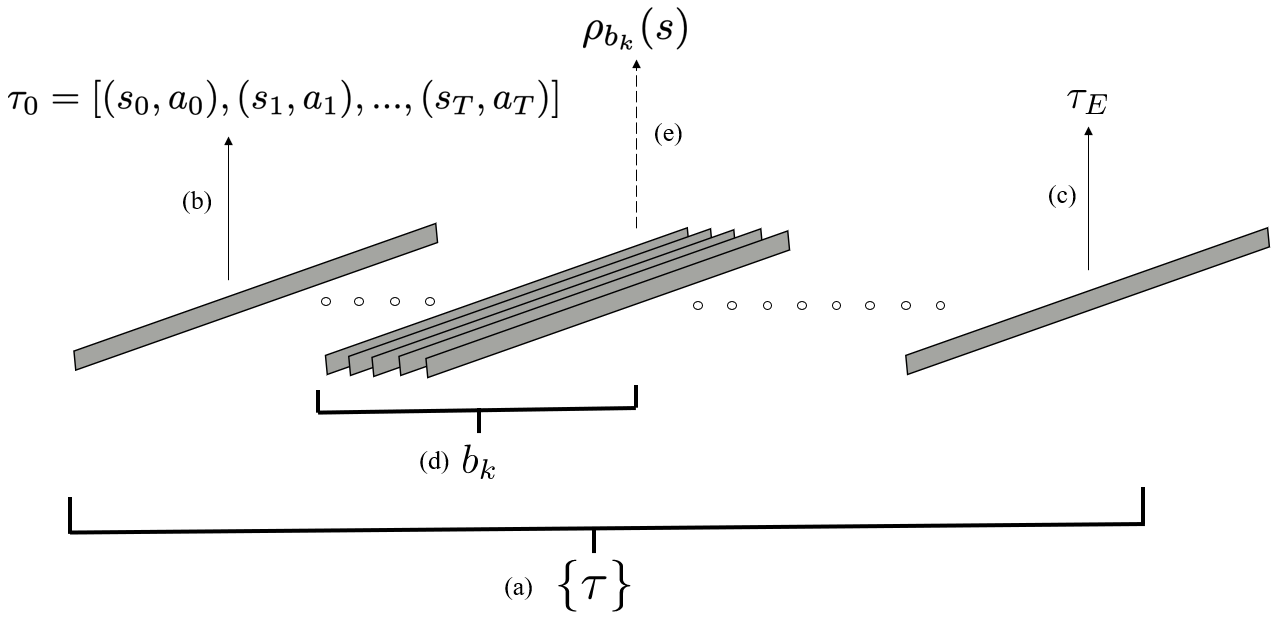}
    \caption{A visual illustration of Bundle Behavior Cloning. \textbf{(a)} Set of trajectories $\{\tau\}$ from total number of $E$ episodes during the forward step. \textbf{(b)} The first trajectory $\tau_0$ containing state action pairs for $T$ timesteps. \textbf{(c)} The last trajectory $\tau_E$ containing state action pairs for the last episode of learning. \textbf{(d)} The bundle $b_k$ of $M$ trajectories that are used to clone for a policy $\tilde{\pi}^k_\psi$. \textbf{(e)} The bundle of trajectories $b_k$ are sampled to get a distribution of individual agent actions per state $s$, $\rho_{b_k}(s)$.}
    \label{fig:bbcvisual}
\end{figure}

Using bundle behavior cloning, we can estimate the agent's policy parameters $\theta$ at various points in time during learning, which we can use to further estimate the gradient $\nabla_\psi \tilde{J}$ using the gradient of the cloned policy associated with bundle $b_k$. Given the cloned policy parameters and gradients, we can train a neural network $\beta(s_{t+1})$ to approximate the agent's reward function using the loss function obtained from (\ref{eq:reinforceforward}):
\begin{align}
\text{loss} = \sum_Z (\beta(s_{t+1})\nabla \pi^k_{\psi}(s_t, a_t)  \gamma^t - (\psi_{k+1} - \psi_k)),
\label{eq:losstrainbeta}
\end{align}
where $Z$ is the batch size of existing state action pairs from the forward case, $\nabla \pi^k_{\psi}(a_t|s_t)$ is the estimated gradient of the cloned policy from the $k$th bundle corresponding to the state action pair $(s_t, a_t)$, and $(\psi_{k+1} - \psi_k)$ is the difference in policy parameters between the $k$th bundle and the $k+1$ bundle. This loss is backpropagated through $\beta$ to predict rewards at states $s_{t+1}$ that will minimize the loss. The full algorithm for training $\beta$ is seen in Algorithm \ref{trainbetaalg} and the complete pipeline detailing all the steps is shown in Fig. \ref{fig:fullpipeline}.
\begin{algorithm} \label{trainbetaalg}
\LinesNumbered
\SetKwInOut{Input}{input}
\SetKwInOut{Output}{output}
\Input{List of trajectories of state-action pairs $\tau$ from entire learning, batch size $Z$, number of training episodes $F$, set of cloned policies of agent's true policies \{$\tilde{\pi}^1_\psi$, $\tilde{\pi}^2_\psi$, ... $\tilde{\pi}^M_\psi$\}}
\Output{Our neural network estimate of the agent's reward function $\beta(s)$}
\BlankLine
\For{$f$ in $F$}{
\For{$z$ in $Z$}{
Sample a state $s$ from $\tau$\;
Calculate the loss using (\ref{eq:losstrainbeta}) at $s$ and add it to overall loss\;
}
Update policy $\beta$ using backpropagation\;
}
\caption{Learning REINFORCE Reward Function of Agent}
\end{algorithm}


\begin{figure*}[h]
    \centering
\includegraphics[scale=0.24]{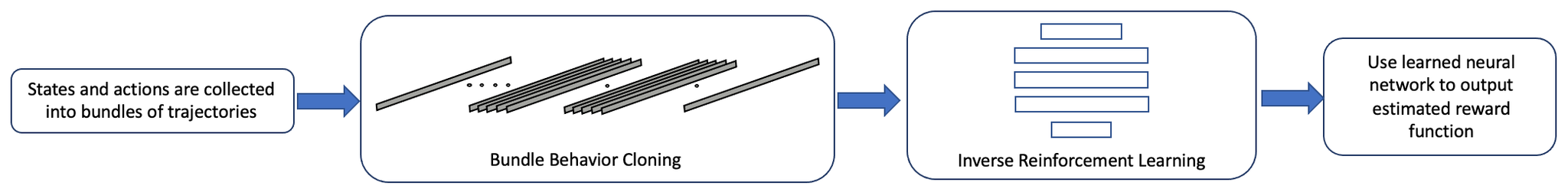}
    \caption{The full pipeline of the algorithm presented in this paper. Note that the learner does not need to finish optimizing its policy before its reward function can be estimated using the algorithm.}
    \label{fig:fullpipeline}
\end{figure*}

\section{Complexity Analysis}
\label{complexityanalysis}
In this section, we provide a sample complexity result and show that under certain conditions bundle behavior cloning achieves tighter complexity bounds than conventional behavior cloning. Specifically, given a set of trajectories of state-action pairs $\{\tau^1, ..., \tau^B\}$, where $\tau^i$ represents the trajectory under the fixed policy $\pi^i$, we show that bundle behavior cloning can be used to estimate the true policy $\pi^1(\cdot \vert s)$ of the agent at time step $1$ using $\hat{\pi}^{1:B}(\cdot \vert s)$, the estimated policy that best fits the distribution sampled from the bundle of trajectories $\{\tau^1, ..., \tau^B\}$ from timestep $1$ to timestep $BT$ and $T$ is the number of samples in one trajectory. Note that we only provide the analysis for $\pi^1$ and the results below hold for all time steps $t$.

Observe that if the polices in the bundle are significantly different from each other, we do not expect a larger bundle size to facilitate policy estimation. As a result, we make the following assumptions:
\begin{assumption} \label{consecutivebounds}
For every consecutive policy pair, there exists a constant $\epsilon$ such that for any state $s$, the following inequality holds:
\begin{align*}
       \vert \vert \pi^t(s) - \pi^{t+1}(s)\vert \vert_{\text{tv}} \leq \epsilon, \nonumber 
\end{align*}
\end{assumption}
where the total variation norm between two distributions $\mu$ and $\nu$ is defined as $TV(\mu, \nu) :=  \underset{A \in \mathcal{B}}{\text{sup}} \vert \mu(A) - \nu(A)\vert$ \cite{steinmethodapplications}, where $\mathcal{B}$ denotes the class of Borel sets.
%
%

The state visitation $d^\pi$ for a fixed policy $\pi$ is defined as $d^\pi = (1-\gamma) \sum_{t=0}^\infty \gamma^t \text{Pr}^\pi (s_t=s),$ where $\text{Pr}^\pi(s_t =s)$ is the probability that $s_t = s$ after starting at state $s_0$ and following $\pi$ thereafter.
The single policy behavior cloning sample complexity has been studied in \cite{agarwal2020flambe} and is presented below.
\begin{lemma}[Theorem 21 in \cite{agarwal2020flambe}] \label{bcstandardlemma}
With probability at least $1-\delta$, we have that
\begin{align}
    \mathbb{E}_{s\sim d^\pi} \vert \vert \hat{\pi}(\cdot \vert s) - \pi(\cdot \vert s) \vert \vert^2_{\text{tv}} \leq \frac{2 \text{ln}(\vert \Pi \vert / \delta)}{T} \nonumber,
\end{align}
where $T$ is the number of samples of state action pairs from a single trajectory, $\Pi$ is the policy class $\Pi = \{\pi : \mathcal{S} \rightarrow \Delta(\mathcal{A})\}$ which is discrete with size $\vert \Pi \vert$. 
\end{lemma}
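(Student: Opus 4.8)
Since the statement is quoted from \cite{agarwal2020flambe}, I only outline the argument. The plan is to read behavior cloning as maximum likelihood estimation (MLE) over the finite class $\Pi$ and then run the classical generalization bound for MLE. I would write $\hat\pi = \argmax_{\pi'\in\Pi}\sum_{i=1}^{T}\ln\pi'(a_i\vert s_i)$ for the observed state-action pairs $(s_1,a_1),\dots,(s_T,a_T)$ along the single trajectory generated by $\pi$. Since $\pi$ itself lies in $\Pi$, optimality of $\hat\pi$ gives the ``basic inequality'' $\sum_{i=1}^{T}\ln\big(\hat\pi(a_i\vert s_i)/\pi(a_i\vert s_i)\big)\ge 0$, and the whole proof amounts to turning this into a bound on a statistical distance between $\hat\pi$ and $\pi$.

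The engine is the identity that the square-root likelihood ratio has conditional first moment equal to the Hellinger affinity: conditioning on $s_i$, $\mathbb{E}\big[\sqrt{\pi'(a_i\vert s_i)/\pi(a_i\vert s_i)}\,\big\vert\,s_i\big]=\sum_{a}\sqrt{\pi(a\vert s_i)\,\pi'(a\vert s_i)}=1-H^2\big(\pi'(\cdot\vert s_i),\pi(\cdot\vert s_i)\big)$, where $H^2(p,q)=\tfrac{1}{2}\sum_{a}\big(\sqrt{p(a)}-\sqrt{q(a)}\big)^2$ is the squared Hellinger distance. Hence, for each fixed $\pi'\in\Pi$, the process $M_t:=\prod_{i\le t}\big(1-H^2(\pi'(\cdot\vert s_i),\pi(\cdot\vert s_i))\big)^{-1}\sqrt{\pi'(a_i\vert s_i)/\pi(a_i\vert s_i)}$ is a nonnegative martingale of mean one for the natural trajectory filtration. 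This is precisely where the within-trajectory dependence of the $T$ samples is absorbed, since only the martingale property, not independence, is used. Markov's inequality together with a union bound over the $|\Pi|$ candidates then gives, with probability at least $1-\delta$ and simultaneously for all $\pi'\in\Pi$,
\begin{align*}
-\sum_{i=1}^{T}\ln\!\big(1-H^2(\pi'(\cdot\vert s_i),\pi(\cdot\vert s_i))\big)\ \le\ \ln\!\big(|\Pi|/\delta\big)+\tfrac{1}{2}\sum_{i=1}^{T}\ln\frac{\pi(a_i\vert s_i)}{\pi'(a_i\vert s_i)}.
\end{align*}
Specializing to $\pi'=\hat\pi$, the basic inequality cancels the last term, and $-\ln(1-x)\ge x$ yields $\sum_{i=1}^{T}H^2\big(\hat\pi(\cdot\vert s_i),\pi(\cdot\vert s_i)\big)\le\ln(|\Pi|/\delta)$, i.e.\ a bound of order $\ln(|\Pi|/\delta)/T$ on the empirical Hellinger risk along the realized states.

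Two further steps close the gap to the stated inequality. First, I would pass from the empirical Hellinger risk to the population risk under $d^\pi$: the realized states of the expert trajectory are distributed according to $d^\pi$, and a standard change-of-measure / decoupling argument (or, equivalently, a uniform Freedman inequality over $\Pi$ whose variance proxy is itself the Hellinger risk) keeps the bound of the same order, giving $\mathbb{E}_{s\sim d^\pi}H^2\big(\hat\pi(\cdot\vert s),\pi(\cdot\vert s)\big)\le \ln(|\Pi|/\delta)/T$ up to the tracked constants. Second, I would convert Hellinger to total variation by the Cauchy--Schwarz bound $\norm{p-q}_{\text{tv}}^2\le 2\,H^2(p,q)$ -- this is exactly where the factor $2$ in the statement originates -- and combine the two estimates to obtain $\mathbb{E}_{s\sim d^\pi}\norm{\hat\pi(\cdot\vert s)-\pi(\cdot\vert s)}_{\text{tv}}^2\le 2\ln(|\Pi|/\delta)/T$.

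The hard part is not the algebra but keeping the constants sharp while handling the two non-ideal features of the sampling model: the $T$ pairs come from a single trajectory rather than being i.i.d., and the target is a population expectation under $d^\pi$ rather than an average over the visited states. Both are dealt with by replacing plain Chernoff arguments with martingale concentration -- the mean-one martingale above and a Freedman-type bound for the empirical-to-population passage -- and the precise constant $2$ is recovered only through the tight conversion $\norm{p-q}_{\text{tv}}\le\sqrt{2}\,H(p,q)$ rather than a looser inequality; a sloppier argument still gives the right rate $O(\ln(|\Pi|/\delta)/T)$ but with an unspecified constant.
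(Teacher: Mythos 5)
The paper gives no proof of this lemma at all---it is imported verbatim as Theorem 21 of \cite{agarwal2020flambe}---and your sketch reconstructs essentially the same argument used in that reference: behavior cloning as MLE over the finite class $\Pi$, the mean-one square-root-likelihood-ratio martingale whose conditional moment is the Hellinger affinity, Markov plus a union bound over $\vert\Pi\vert$, and the conversion $\norm{p-q}_{\text{tv}}^2 \le 2H^2(p,q)$ yielding the constant $2$. The only cosmetic difference is that the cited proof extracts the population (conditional-expectation) Hellinger risk directly inside the martingale argument rather than bounding the empirical risk first and then invoking a separate Freedman-type transfer step, but this does not change the substance or the rate.
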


Before applying Lemma \ref{bcstandardlemma} to $\tau^1$ (standard behavior cloning) and $\{\tau^1,\cdots,\tau^B\}$ (bundle behavior cloning), we assume the policies that generate $\{\tau^1,\cdots,\tau^B\}$ are independent. Then, defining the resultant cloned policies for both sets of trajectories $\hat{\pi}^1(\cdot \vert s)$ and $\hat{\pi}^{1:B}(\cdot \vert s)$ respectively, we have that, with probability $1-\delta$:
\begin{align} 
     &\mathbb{E}_{s\sim d^{\pi^1}} \vert \vert \hat{\pi}^1(\cdot \vert s) - \pi^1(\cdot \vert s) \vert \vert^2_{\text{tv}} \leq 2 \frac{\text{ln}(\vert \Pi \vert / \delta)}{T} \label{policy1expect} \\
     &\mathbb{E}_{s\sim d^{\pi^{1:B}}} \vert \vert \hat{\pi}^{1:B}(\cdot \vert s) - \pi^{1:B}(\cdot \vert s) \vert \vert^2_{\text{tv}} \leq \frac{2 \text{ln}(\vert \Pi \vert / \delta)}{BT}  \label{policyBexpect}.
\end{align}
In what follows, our goal is to provide a bound on $\mathbb{E}_{s\sim d^{\pi^{1}}} \vert \vert \hat{\pi}^{1:B}(\cdot \vert s) - \pi^{1}(\cdot \vert s) \vert \vert^2_{\text{tv}},$ which measures the difference between the policy returned by bundle behavior cloning and the true policy $\pi^1$. Note that, this expectation is taken with respect to the state visitation of $\pi^1$, which is different from (\ref{policyBexpect}). To address this challenge, we first present a result from \cite{achiam2017constrained} that shows that the difference between the distributions $d^{\pi^{1}}$ and $d^{\pi^{1:B}}$ can be bounded.
\begin{lemma}[Lemma 3 in \cite{achiam2017constrained}] \label{statedistributionlemma}
The divergence between discounted state visitation distribution is bounded by an average divergence of the policies:
\begin{align}
    \vert \vert d^{\pi'} - d^\pi \vert \vert_1 \leq \frac{2\gamma}{1-\gamma} \mathbb{E}_{s\sim d^\pi} \vert \vert \pi'(\cdot \vert s) - \pi(\cdot \vert s) \vert \vert_\text{tv},
\end{align} 
\end{lemma}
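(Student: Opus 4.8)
The plan is to realize the discounted state visitation distribution as a (normalized) resolvent of the policy-induced transition operator and then invoke the second resolvent identity. Writing the (finite) state set as an index set, let $P_\pi$ be the state-to-state transition matrix with entries $(P_\pi)_{s s'} = \sum_a \pi(a \vert s)\, P(s' \vert s, a)$, and let $\mu_0$ denote the initial state distribution (a point mass at $s_0$, or $\rho^u$), both written as row vectors. First I would observe that the definition $d^\pi(s) = (1-\gamma)\sum_{t=0}^\infty \gamma^t\, \text{Pr}^\pi(s_t = s)$ together with $\text{Pr}^\pi(s_t = \cdot) = \mu_0 P_\pi^t$ and the Neumann series $\sum_{t=0}^\infty \gamma^t P_\pi^t = (I - \gamma P_\pi)^{-1}$ (which converges since $\gamma < 1$ and $P_\pi$ is stochastic) gives the compact expression $d^\pi = (1-\gamma)\, \mu_0 (I - \gamma P_\pi)^{-1}$.

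Next I would apply the resolvent identity $(I - \gamma P_{\pi'})^{-1} - (I - \gamma P_\pi)^{-1} = \gamma (I - \gamma P_\pi)^{-1}(P_{\pi'} - P_\pi)(I - \gamma P_{\pi'})^{-1}$, multiply both sides on the left by $(1-\gamma)\mu_0$, and absorb the left factor $(1-\gamma)\mu_0(I-\gamma P_\pi)^{-1}$ back into $d^\pi$. This produces the exact identity $d^{\pi'} - d^\pi = \gamma\, d^\pi (P_{\pi'} - P_\pi)(I - \gamma P_{\pi'})^{-1}$. Taking $\|\cdot\|_1$ and using that each power $P_{\pi'}^t$ is row-stochastic — so $\|v P_{\pi'}^t\|_1 \le \|v\|_1$ for every row vector $v$, hence $\|v (I - \gamma P_{\pi'})^{-1}\|_1 \le \|v\|_1 / (1-\gamma)$ by the triangle inequality over the Neumann series — yields $\|d^{\pi'} - d^\pi\|_1 \le \frac{\gamma}{1-\gamma}\,\|d^\pi (P_{\pi'} - P_\pi)\|_1$.

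Finally I would expand the $s'$-th entry of the row vector $d^\pi(P_{\pi'} - P_\pi)$ as $\sum_s d^\pi(s) \sum_a \big(\pi'(a \vert s) - \pi(a \vert s)\big) P(s' \vert s, a)$, push the absolute value through both the $\sum_s$ and the $\sum_a$ by the triangle inequality (legitimate since $d^\pi(s) \ge 0$ and $P(s' \vert s,a) \ge 0$), and then sum over $s'$ using $\sum_{s'} P(s' \vert s,a) = 1$ to collapse the transition kernel. What remains is $\sum_s d^\pi(s) \sum_a |\pi'(a \vert s) - \pi(a \vert s)| = 2\, \mathbb{E}_{s \sim d^\pi}\|\pi'(\cdot \vert s) - \pi(\cdot \vert s)\|_{\text{tv}}$, where the factor $2$ is exactly $\sum_a |\mu(a) - \nu(a)| = 2\|\mu - \nu\|_{\text{tv}}$ for the total-variation norm as defined above. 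Chaining this with the bound from the previous paragraph gives $\|d^{\pi'} - d^\pi\|_1 \le \frac{2\gamma}{1-\gamma}\,\mathbb{E}_{s \sim d^\pi}\|\pi'(\cdot \vert s) - \pi(\cdot \vert s)\|_{\text{tv}}$, which is the claim.

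The one place where care is genuinely required — and the only real obstacle — is the bookkeeping of which side the stochastic operators sit on: I must use the form of the resolvent identity that leaves the $(I - \gamma P_\pi)^{-1}$ factor on the \emph{left} of $(P_{\pi'} - P_\pi)$, so that it merges into $d^\pi$ and the final average is taken against $d^\pi$ rather than $d^{\pi'}$, while the $(I - \gamma P_{\pi'})^{-1}$ factor ends up on the \emph{right} where it can be discarded via the $1/(1-\gamma)$ contraction estimate. Every other step is a triangle inequality plus the row-stochasticity of $P_\pi$; for a continuous state space the same argument carries over verbatim with sums replaced by integrals and matrices replaced by Markov kernel operators acting on finite signed measures.
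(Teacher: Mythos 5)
Your proof is correct: the resolvent/Neumann-series identity $d^{\pi'} - d^\pi = \gamma\, d^\pi (P_{\pi'} - P_\pi)(I-\gamma P_{\pi'})^{-1}$, the $1/(1-\gamma)$ contraction bound, and the factor-of-$2$ conversion between the $\ell_1$ distance over actions and the total-variation norm as defined here all check out, including the care about which resolvent merges into $d^\pi$ so the expectation lands on $d^\pi$ rather than $d^{\pi'}$. Note that the paper itself states this lemma without proof, importing it from \cite{achiam2017constrained}; your argument is essentially the same perturbation-of-the-resolvent proof given in that reference, so there is nothing further to reconcile.
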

where $\gamma$ is the discount factor and $d^{\pi'}$, $d^\pi$ are two state visitation distributions associated with $\pi'(\cdot \vert s)$, $\pi(\cdot \vert s)$ respectively. 

Before proving our main theorem, we provide some properties of the policies within the bundle.
\begin{lemma} \label{thirdlemma}
Let Assumption \ref{consecutivebounds} hold. For any two policies $\pi^i, \pi^j \in \{\pi^1, ..., \pi^B\}$, we have that:
\begin{align}
    &\vert \vert \pi^i - \pi^j \vert \vert_\text{tv} \leq (B-1)\epsilon \nonumber\\
    &\vert \vert d^{\pi^i} - d^{\pi^j} \vert \vert_1 \leq \frac{2\gamma (B-1)\epsilon}{1-\gamma} \nonumber
\end{align}
\end{lemma}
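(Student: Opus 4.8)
The plan is to prove the two inequalities in sequence, with the first feeding directly into the second. For the first bound, I would invoke the triangle inequality for the total variation norm together with a telescoping argument. Since $\pi^i$ and $\pi^j$ both lie in $\{\pi^1,\dots,\pi^B\}$, their indices satisfy $|i-j|\le B-1$. Writing $\pi^i - \pi^j$ as a telescoping sum of consecutive differences and applying the triangle inequality gives
\begin{align*}
\|\pi^i(\cdot\vert s) - \pi^j(\cdot\vert s)\|_{\text{tv}} \le \sum_{\ell} \|\pi^{\ell}(\cdot\vert s) - \pi^{\ell+1}(\cdot\vert s)\|_{\text{tv}},
\end{align*}
where the sum runs over the $|i-j|$ consecutive pairs between indices $i$ and $j$. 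Each summand is at most $\epsilon$ by Assumption \ref{consecutivebounds}, so the total is at most $|i-j|\,\epsilon \le (B-1)\epsilon$. Crucially, this bound is uniform over all states $s$.

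For the second bound, I would apply Lemma \ref{statedistributionlemma} with $\pi' = \pi^i$ and $\pi = \pi^j$, which yields
\begin{align*}
\|d^{\pi^i} - d^{\pi^j}\|_1 \le \frac{2\gamma}{1-\gamma}\,\mathbb{E}_{s\sim d^{\pi^j}}\|\pi^i(\cdot\vert s) - \pi^j(\cdot\vert s)\|_{\text{tv}}.
\end{align*}
Since the first inequality holds pointwise in $s$, the integrand is bounded above by $(B-1)\epsilon$ regardless of the state, so the expectation is also at most $(B-1)\epsilon$. Substituting gives $\|d^{\pi^i} - d^{\pi^j}\|_1 \le \frac{2\gamma(B-1)\epsilon}{1-\gamma}$, completing the proof.

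There is no serious obstacle here: both steps are direct consequences of results already established (the triangle inequality for TV norm, Assumption \ref{consecutivebounds}, and Lemma \ref{statedistributionlemma}). The only point requiring a word of care is that the pointwise-in-$s$ nature of the first bound is what lets it pass through the expectation over $d^{\pi^j}$ in the second step; I would state this explicitly to keep the argument airtight. One could also remark that the bound is tight in the worst case, attained when all consecutive policy pairs differ by exactly $\epsilon$ in the same ``direction,'' though this is not needed for the lemma.
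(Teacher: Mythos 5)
Your proof is correct and follows essentially the same route as the paper's: the first bound via a telescoping triangle-inequality argument under Assumption \ref{consecutivebounds}, and the second by applying Lemma \ref{statedistributionlemma} and bounding the integrand pointwise by $(B-1)\epsilon$. You merely spell out the telescoping and the pointwise-to-expectation step more explicitly (and correctly cite Lemma \ref{statedistributionlemma}, where the paper's proof text mistakenly refers to the lemma being proved); the substance is identical.
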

\begin{proof}
The first property follows from the triangle inequality of total variations. According to Lemma \ref{thirdlemma}, we have $\norm{d^{\pi^i}-d^{\pi^j}}_1 \leq \frac{2\gamma}{1-\gamma}\mathbb{E}_{s\sim d^{\pi^j}}\norm{{\pi^i}(\cdot|s)-{\pi^j}(\cdot|s)}_{tv}\leq \frac{2\gamma}{1-\gamma}\mathbb{E}_{s\sim d^{\pi^j}}(B-1)\epsilon=\frac{2\gamma(B-1)\epsilon}{1-\gamma}.$ 
\end{proof}
\begin{corollary} \label{corollaryforproof}
Policies $\pi^1$ and $\pi^{1:B}$ satisfy the inequalities in Lemma $3$.
\end{corollary}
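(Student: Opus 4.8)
The plan is to exploit the fact that $\pi^{1:B}$ is, by construction, a per-state mixture of the bundle policies $\pi^1,\dots,\pi^B$, and then to push the bounds of Lemma~\ref{thirdlemma} through this mixture using convexity of the total variation norm. At each state $s$, the action distribution of the bundle $\{\tau^1,\dots,\tau^B\}$ is a convex combination $\pi^{1:B}(\cdot\vert s)=\sum_{i=1}^{B} w_i(s)\,\pi^i(\cdot\vert s)$ with $w_i(s)\ge 0$ and $\sum_i w_i(s)=1$ (in the simplest reading $w_i(s)=1/B$; in general $w_i(s)$ is proportional to how often $\tau^i$ visits $s$). Since a per-state mixture of stationary policies is again a stationary Markov policy, $d^{\pi^{1:B}}$ is well defined and both Lemma~\ref{statedistributionlemma} and Lemma~\ref{thirdlemma} are applicable to it.

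First I would prove the policy-level inequality. For every $s$, convexity of $\norm{\cdot}_{\text{tv}}$ gives
\begin{align}
\norm{\pi^1(\cdot\vert s)-\pi^{1:B}(\cdot\vert s)}_{\text{tv}}
\le \sum_{i=1}^{B} w_i(s)\,\norm{\pi^1(\cdot\vert s)-\pi^i(\cdot\vert s)}_{\text{tv}}
\le (B-1)\epsilon, \nonumber
\end{align}
where the last step uses the first bound of Lemma~\ref{thirdlemma} for each pair $(\pi^1,\pi^i)$ together with $\sum_i w_i(s)=1$. This is precisely the first inequality of Lemma~\ref{thirdlemma} with $(\pi^i,\pi^j)$ taken to be $(\pi^1,\pi^{1:B})$.

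Then I would apply Lemma~\ref{statedistributionlemma} with $\pi'=\pi^{1:B}$, $\pi=\pi^1$, and substitute the pointwise bound just obtained:
\begin{align}
\norm{d^{\pi^{1:B}}-d^{\pi^1}}_1
\le \frac{2\gamma}{1-\gamma}\,\mathbb{E}_{s\sim d^{\pi^1}}\norm{\pi^{1:B}(\cdot\vert s)-\pi^1(\cdot\vert s)}_{\text{tv}}
\le \frac{2\gamma(B-1)\epsilon}{1-\gamma}, \nonumber
\end{align}
which is the second inequality of Lemma~\ref{thirdlemma} for the pair $(\pi^1,\pi^{1:B})$. Combining the two displays establishes the corollary.

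The only real obstacle is conceptual rather than computational: one has to justify that $\pi^{1:B}$ genuinely behaves like a stationary, per-state convex combination of $\pi^1,\dots,\pi^B$, so that $d^{\pi^{1:B}}$ is meaningful and TV convexity can be invoked state by state. If one prefers to work with the empirical bundle distribution, the weights $w_i(s)$ become state-dependent and can be undefined at states no trajectory visits; this is harmless because the first inequality is to be read pointwise over visited states and the second only ever averages over $s\sim d^{\pi^1}$, on whose support the weights are well behaved. Everything past that point is a one-line application of Lemmas~\ref{thirdlemma} and~\ref{statedistributionlemma}.
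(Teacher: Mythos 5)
Your proof is correct and takes essentially the same route as the paper: write $\pi^{1:B}$ as a per-state mixture of the bundle policies, bound $\norm{\pi^1(\cdot\vert s)-\pi^{1:B}(\cdot\vert s)}_{\text{tv}}$ by $(B-1)\epsilon$ via convexity/triangle inequality together with Lemma~\ref{thirdlemma}, and then apply Lemma~\ref{statedistributionlemma} to obtain the state-visitation bound $\frac{2\gamma(B-1)\epsilon}{1-\gamma}$. Your use of convexity as an inequality is in fact slightly more careful than the paper's own write-up, which states the uniform-mixture decomposition as an equality before coarsening each term to $(B-1)\epsilon$.
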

\begin{proof}
 We use the definition $\pi^{1:B} = \frac{1}{B} (\sum_{i=1}^B \pi^i)$ and substitute this into the left hand side norm of the first equation in Lemma \ref{thirdlemma} to get $\vert \vert \pi^1 - \frac{1}{B} \sum_{i=1}^B \pi^i\vert \vert = \frac{1}{B} (\sum_{i=2}^B \vert \vert \pi^1 - \pi^i \vert \vert)$. Using Lemma $3$ we can obtain an upper bound as $ \vert \vert \pi^1 - \pi^{1:B} \vert \vert_\text{tv} \leq \frac{1}{B} (\sum_{i=2}^B \vert i-1\vert ) \epsilon$. Since $\vert i-1\vert$ is upper bounded by $B-1$, we further get $ \vert \vert \pi^1 - \pi^{1:B} \vert \vert_\text{tv} \leq \frac{1}{B} (\sum_{i=2}^B B-1 ) \epsilon = \frac{(B-1)^2}{B}  \epsilon< (B-1)\epsilon$. We can then substitute this into Lemma \ref{statedistributionlemma} and obtain $\vert \vert d^{\pi^1} - d^{\pi^{1:B}} \vert \vert_1 \leq \frac{2\gamma (B-1)\epsilon}{1-\gamma}$. 
\end{proof}

We provide the sample complexity for bundle behavior cloning below.
\begin{theorem} \label{maintheorem}
With probability at least $(1-\delta)^2$, we have that 
\begin{align} \label{theoremequation}
    &\mathbb{E}_{s\sim d^{\pi^1}} \vert \vert \hat{\pi}^{1:B}(\cdot \vert s) - \pi^1(\cdot \vert s) \vert \vert^2_\text{tv} \nonumber\\ & \leq \frac{4\gamma(B-1)\epsilon}{1-\gamma} + \frac{4\text{ln}(\vert \Pi \vert /\delta)}{BT} + 2\epsilon^2 (B-1)^2.
\end{align}
\end{theorem}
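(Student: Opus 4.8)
The plan is to reduce the target quantity $\mathbb{E}_{s\sim d^{\pi^1}} \norm{\hat{\pi}^{1:B}(\cdot|s) - \pi^1(\cdot|s)}^2_{\text{tv}}$ to the already-established bound (\ref{policyBexpect}) through two steps: a triangle-inequality split that inserts the averaged policy $\pi^{1:B}$ as an intermediate term, and a change of measure from $d^{\pi^1}$ to $d^{\pi^{1:B}}$.

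First I would use the triangle inequality for the total variation norm, pointwise in $s$, together with the elementary inequality $(a+b)^2 \le 2a^2 + 2b^2$, to get
\begin{align*}
\norm{\hat{\pi}^{1:B}(\cdot|s) - \pi^1(\cdot|s)}^2_{\text{tv}} &\le 2\norm{\hat{\pi}^{1:B}(\cdot|s) - \pi^{1:B}(\cdot|s)}^2_{\text{tv}} \\
&\quad + 2\norm{\pi^{1:B}(\cdot|s) - \pi^1(\cdot|s)}^2_{\text{tv}}.
\end{align*}
Taking $\mathbb{E}_{s\sim d^{\pi^1}}$ of both sides, the second term is dispatched immediately by Corollary \ref{corollaryforproof}: since $\norm{\pi^1 - \pi^{1:B}}_{\text{tv}} \le (B-1)\epsilon$ holds for every $s$, its contribution is at most $2(B-1)^2\epsilon^2$, which is the last term of (\ref{theoremequation}).

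For the first term I would change the measure from $d^{\pi^1}$ to $d^{\pi^{1:B}}$, writing $\mathbb{E}_{s\sim d^{\pi^1}}[f(s)] = \mathbb{E}_{s\sim d^{\pi^{1:B}}}[f(s)] + \sum_s \big(d^{\pi^1}(s) - d^{\pi^{1:B}}(s)\big) f(s)$ with $f(s) = \norm{\hat{\pi}^{1:B}(\cdot|s) - \pi^{1:B}(\cdot|s)}^2_{\text{tv}}$. Because the squared total variation is bounded by $1$ and $d^{\pi^1}, d^{\pi^{1:B}}$ are probability distributions, the correction sum is at most $\norm{d^{\pi^1} - d^{\pi^{1:B}}}_1 \le \frac{2\gamma(B-1)\epsilon}{1-\gamma}$ by Corollary \ref{corollaryforproof} (which invokes Lemma \ref{statedistributionlemma}), while the leading expectation is at most $\frac{2\ln(|\Pi|/\delta)}{BT}$ by (\ref{policyBexpect}). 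Multiplying this first-term bound by $2$ and adding the $2(B-1)^2\epsilon^2$ term gives exactly $\frac{4\gamma(B-1)\epsilon}{1-\gamma} + \frac{4\ln(|\Pi|/\delta)}{BT} + 2\epsilon^2(B-1)^2$. The probability $(1-\delta)^2$ is the joint success probability of the two independent behavior-cloning concentration events underlying (\ref{policy1expect})--(\ref{policyBexpect}); only (\ref{policyBexpect}) is actually used in the chain above.

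The step I expect to be the main obstacle is the change of measure: (\ref{policyBexpect}) controls the cloned-policy error only under $d^{\pi^{1:B}}$, whereas the theorem demands control under $d^{\pi^1}$, so one must pay for the mismatch between these state-visitation distributions — this is precisely where Lemma \ref{statedistributionlemma} and Lemma \ref{thirdlemma}/Corollary \ref{corollaryforproof} enter, and where the $\frac{\gamma}{1-\gamma}$ horizon factor shows up in the bound. The remaining ingredients — the triangle inequality, the estimate $(a+b)^2\le 2a^2+2b^2$, and boundedness of the total variation norm by $1$ — are routine.
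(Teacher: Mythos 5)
Your proof is correct and follows essentially the same route as the paper: insert the averaged bundle policy $\pi^{1:B}$ via the $(a+b)^2\le 2a^2+2b^2$ split, bound the $\pi^1$-vs-$\pi^{1:B}$ term with Corollary \ref{corollaryforproof}, and handle the cloned-policy term by a change of measure from $d^{\pi^1}$ to $d^{\pi^{1:B}}$ paid for through Lemma \ref{statedistributionlemma} (the paper's add-and-subtract of $\mathbb{E}_{s\sim d^{\pi^{1:B}}}$ is exactly your change-of-measure step), then apply (\ref{policyBexpect}). Your remark that only the bundle-cloning event is actually needed, so $(1-\delta)^2$ is conservative for the theorem itself, is a fair observation and does not affect correctness.
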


\begin{proof}
  For simplicity of notation we hereby define $\rho^i = \pi^i(\cdot \vert s)$. First we rewrite the expectation on the left hand side of (\ref{theoremequation}) as $\mathbb{E}_{s\sim d^{\pi^1}} \vert \vert \hat{\rho}^B - \rho^B + \rho^B - \rho^1\vert \vert^2_\text{tv}$. Using the inequality $\vert \vert x - z\vert \vert^2 \leq 2 \vert \vert x - y\vert \vert^2 + 2 \vert \vert y - z\vert \vert^2$ we obtain an upper bound on the difference between the distribution of the true policy and that of the policy cloned from bundle behavior cloning as:
\begin{align*}
    \mathbb{E}_{s\sim d^{\pi^1}} \vert \vert \hat{\rho}^B - \rho^B + \rho^B - \rho^1\vert \vert^2_\text{tv} \leq \\
    2\mathbb{E}_{s\sim d^{\pi^1}} \vert \vert \hat{\rho}^B - \rho^B \vert \vert^2_\text{tv} + 2\mathbb{E}_{s\sim d^{\pi^1}} \vert \vert \rho^B - \rho^1 \vert \vert^2_\text{tv}.
\end{align*}
The right hand bound can then be transformed as:
\begin{align}
\begin{split}\label{eq:righthandboundtransform}
    2\mathbb{E}_{s\sim d^{\pi^1}} \vert \vert \hat{\rho}^B - \rho^B \vert \vert^2_\text{tv} - 2\mathbb{E}_{s\sim d^{\pi^B }} \vert \vert \hat{\rho}^B - \rho^B \vert \vert^2_\text{tv} \\+ 2\mathbb{E}_{s\sim d^{\pi^B}} \vert \vert \hat{\rho}^B - \rho^B \vert \vert^2_\text{tv} + 2\mathbb{E}_{s\sim d^{\pi^1}} \vert \vert \rho^B - \rho^1 \vert \vert^2_\text{tv}.
\end{split}
\end{align}
Using Corollary \ref{corollaryforproof} we obtain an upper bound on (\ref{eq:righthandboundtransform}):
\begin{align*}
    &\leq 2[\mathbb{E}_{s\sim d^{\pi^1}} \vert \vert \hat{\rho}^B - \rho^B \vert \vert^2_\text{tv} - \mathbb{E}_{s\sim d^{\pi^B }} \vert \vert \hat{\rho}^B - \rho^B \vert \vert^2_\text{tv}] \\
    & \quad + \frac{4\text{ln}(\vert \Pi \vert /\delta)}{BT} + 2\epsilon^2 (B-1)^2\\
    &\leq 2[\sum_s (\mathbb{P}_{d^{\pi^1}}(s) - \mathbb{P}_{d^{\pi^B}}(s)) \vert \vert \hat{\rho}^B - \rho^B \vert \vert^2_\text{tv}] \\
    &\quad + \frac{4\text{ln}(\vert \Pi \vert /\delta)}{BT} + 2\epsilon^2 (B-1)^2\\
    &\leq 2[\sum_s \vert \mathbb{P}_{d^{\pi^1}}(s) - \mathbb{P}_{d^{\pi^B}}(s)\vert]+ \frac{4\text{ln}(\vert \Pi \vert /\delta)}{BT} + 2\epsilon^2 (B-1)^2.
\end{align*}
Using Lemma \ref{statedistributionlemma}, we further have that:
\begin{align}\label{finalinequality}
    &\leq \frac{4\gamma(B-1)\epsilon}{1-\gamma} + \frac{4\text{ln}(\vert \Pi \vert /\delta)}{BT} + 2\epsilon^2 (B-1)^2
\end{align}
which completes the proof   
\end{proof}

Note that with the same probability $(1-\delta)^2$, we have with probability at least $(1-\delta)^2$, $\mathbb{E}_{s\sim d^{\pi^1}} \vert \vert \hat{\pi}^1(\cdot \vert s) - \pi^1(\cdot \vert s) \vert \vert^2_{\text{tv}} \leq \frac{2 \text{ln}(\vert \Pi \vert / (2\delta - \delta^2))}{T}$ according to (\ref{policy1expect}). Note also that there exists an optimal $B$ that minimizes the upper bound in (\ref{finalinequality}). If the minimum value of (\ref{finalinequality}) is strictly less than $\frac{2 \text{ln}(\vert \Pi \vert / (2\delta - \delta^2))}{T}$ as derived from (\ref{policy1expect}), then Theorem 1 guarantees that the policy learned from the bundle achieves a tighter bound on the difference between the distribution of the cloned policy and that of the true policy than the same bound achieved in standard behavior cloning. Of course, the minimum value of the bound in (\ref{finalinequality}) also depends on $\epsilon$, so bundle behavior cloning may perform worse than conventional behavior cloning if $\epsilon$ is large.

\section{Experimental Results}
\label{experiments}
\label{experiments}
\subsection{Testing Environment}
In this section, we demonstrate the proposed IRL algorithm on a Gridworld environment, where the learning agent seeks to reach a goal state. The environment is a $7$ by $7$ grid, with the agent starting at the top left square. It has three possible actions, moving to the right, moving to the left, and moving down. The agent receives the maximum reward of $20$ when it reaches the goal state, located at the bottom right square in the grid, but different squares on the grid result in different, negative rewards for the agent ranging from $-5$ to $-2$. All experiments are run using PyTorch \cite{paszke2017automatic} on a Windows system with an RTX 3080 GPU.

\subsection{Learning Rewards}
\begin{figure}
    \centering
\includegraphics[scale=0.3]{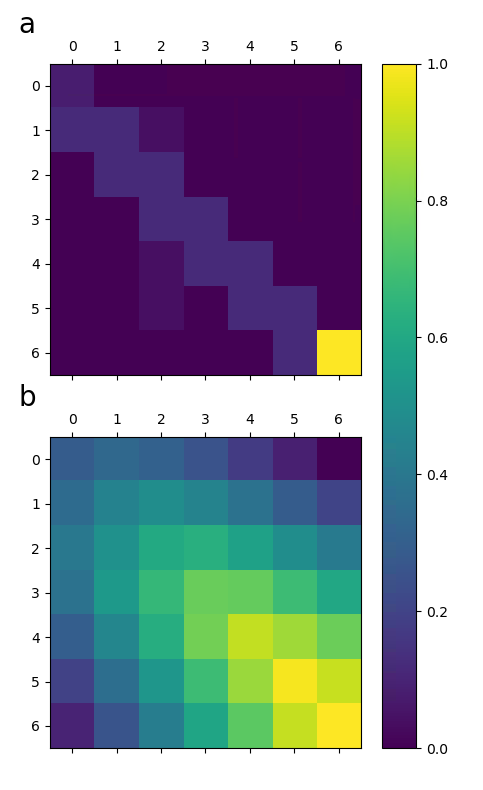}
    \caption{Colormaps representing the true and learned reward functions using Algorithm $2$ with Bundle Behavior Cloning \textbf{(a)} Agent's normalized, true reward function. \textbf{(b)} Agent's normalized, learned reward function with the same neural network structure.}
    \label{fig:colormapbbc}
\end{figure}
We use our proposed IRL method with bundle behavior cloning to learn the reward function of the agent. The forward policy is modeled by a 2 layer neural network with $16$ hidden nodes and ReLU activation functions on the first layer. The neural network used to clone the policies is the same as that in the forward case. The bundle has size $B=15$. There are $M=333$ total bundles defined across $5000$ trajectories and each trajectory consists of $15$ timesteps. As a result, each bundle contains $225$ state action pairs. The learning rate used during training of the forward and cloned policies is $0.00075$ and $\gamma$ is set to $0.999$. The neural network used to model the estimated reward $\beta$ is a 2 layer neural network with $20$ hidden nodes with a ReLU activation function on the first layer. The batch size used to train $\beta$ is $100$ and the total number of training episodes to is $5000$.

As neural networks' weights are trained stochastically \cite{uncertaintyneuralnetworks}, we use only the last layer of weights from the cloned policies to train $\beta$ in order to minimize the variance in estimated gradients from layers before the output layer. The last layer in a neural network is used to output the distribution of actions, and thereby we can minimize the effect of multiple combinations of neural network weights that could output the same estimated distribution.

To compare the reward $\beta$ learned using bundle behavior cloning and the proposed IRL method to the true agent reward, we first normalize and scale the reward $\beta$. Fig. \ref{fig:colormapbbc} shows the comparison results on a normalized color map. The recovered reward function is able to localize the goal state in the environment as well as identify states on the optimal path.

To further validate the predicted rewards, we use them to train a new policy which we then compare to the optimal forward policy learned from the true reward function. The average rewards and standard deviations of these forward and inverse learned policies are shown in Table \ref{maintable}. The learned policy is able to produce the same optimal trajectory as the forward optimal policy and achieves the same possible reward. 

\begin{figure*}[h]
    \centering
\includegraphics[scale=0.32]{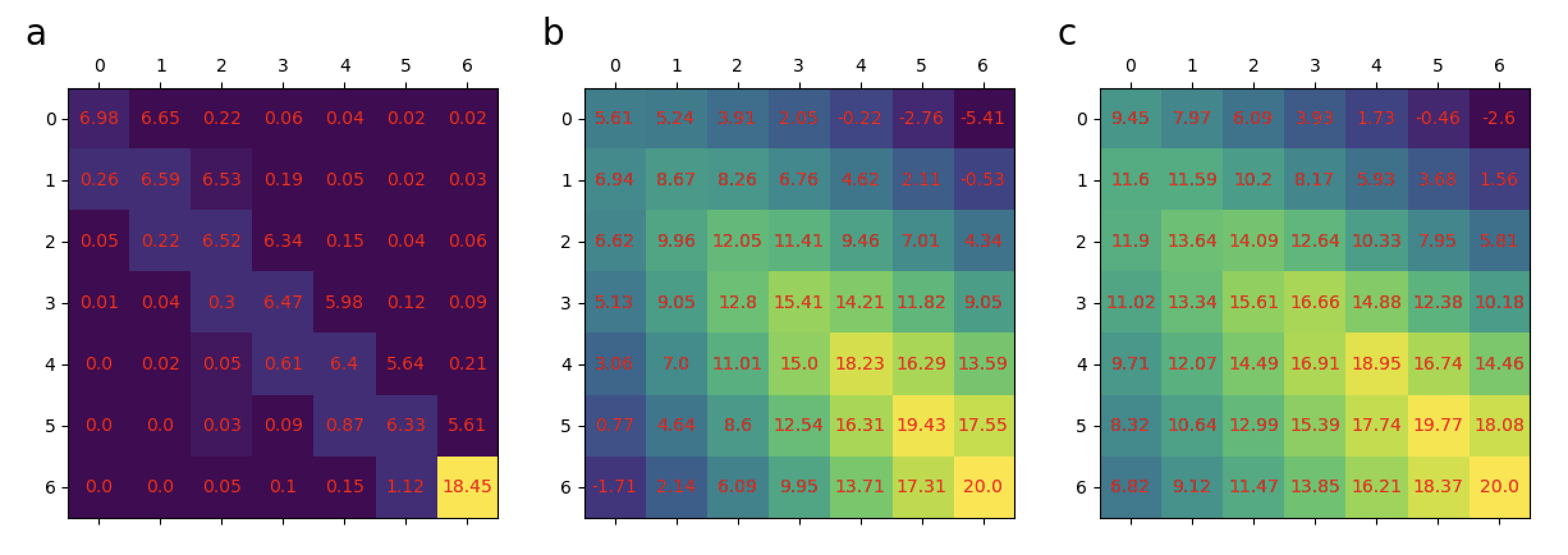}
\caption{The state distribution in the forward case can affect the variance in reward estimation. \textbf{(a)} The state distribution of the forward trajectories in percentages. \textbf{(b)} Lower bound and \textbf{(c)} upper bound of predicted scaled, normalized rewards for $95\%$ confidence measure. States with higher percentages of visitation correlate with less variance in estimated rewards.}
    \label{fig:95confidence}
\end{figure*}
In Fig. \ref{fig:95confidence} we train the neural network $\beta$ $10$ times. We normalize and scale the results as before, and present the lower and upper bound reward functions for a $95\%$ confidence interval. In Fig. \ref{fig:95confidence}a we see the state distribution from the forward learning trajectories. States that are visited more have a smaller range of estimated rewards within this confidence interval, whereas states like the bottom left corner have the largest range.

\begin{table}
  \caption{Averaged Rewards from learned Policy}
  \label{sample-table}
  \centering
  \begin{tabular}{l|l}
    \hline
    Description     & Average Reward    \\
    \hline
     Forward Policy: 2 layer, 16 hidden nodes & \textbf{57.461 $\pm$ 4.199}       \\
     BBC: 2 layer, 16 hidden nodes (same structure)     & \textbf{57.840 $\pm$ 0.946}     \\
     BBC: 2 layer, 16 hidden nodes, independent policies & \textbf{54.551 $\pm$ 1.777}     \\
    \hline
    BBC: 2 layer, 8 hidden nodes & \textbf{57.923 $\pm$ 1.248}      \\
    BBC: 2 layer, 24 hidden nodes     & \textbf{57.912 $\pm$ 1.262}       \\
    BBC: 2 layer, 32 hidden nodes     & 46.855 $\pm$ 3.865   \\
    BBC: 3 layer, 16 hidden nodes     & \textbf{52.190 $\pm$ 3.037}     \\
    BBC: 5 layer, 16 hidden nodes     & \textbf{54.934 $\pm$ 1.304}\\
    \hline
  \end{tabular}
   \label{maintable}
\end{table}

\subsection{Independent Policies}
In the forward case, future policy weights depend on previous ones, so the assumption that the policies that generate $\{\tau^1,\cdots,\tau^B\}$ are independent in the theoretical analysis in Section \ref{complexityanalysis} may not necessarily hold in practice. Simulating independent policies is possible as shown in \cite{simulatingiidpoison}, \cite{simulatingiidfromarbitrary}, and we do so by doubling the size of the bundle but skipping every other state action pair during sampling. As a result, the number of state action pairs per bundle is kept constant and subsequent policies are less dependent on each other. This sampling strategy still allows to reach the optimal goal position as seen in the first part of Table \ref{maintable}.

\subsection{Testing Different Neural Network Structures}
We test different neural network structures to measure robustness of bundle behavior cloning to different network structures when the true neural network structure of the agent's forward policy is not known. The second part of Table \ref{maintable} shows the average rewards of the learned policy for different neural network structures. In our experiments, we assume we know the activation functions used in between layers (ReLU). We leave the analysis of the impact of different activation functions on the performance of the algorithm to future work. We observe that it is possible to still learn the reward function of the agent even if a different neural network structure is used, compared to that of the agent's forward policy. Nevertheless, large deviations between the learnt and true policy networks have an effect on the reward estimation; the 32 hidden nodes network, results in a larger error in reward estimation.

\begin{figure}[h]
    \centering
\includegraphics[scale=0.45]{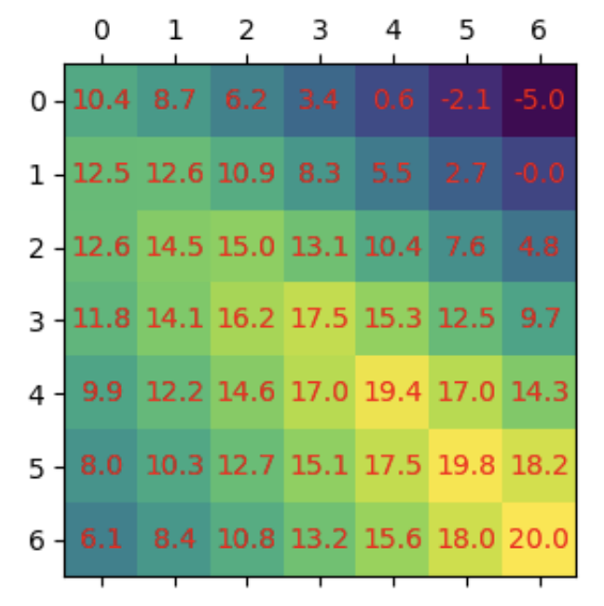}
    \caption{Only using the first 50 bundles ($750$ episodes) to learn the reward function.}
    \label{fig:diffbundlesizes}
\end{figure}
\subsection{Learning While Learning}
In this experiment, we test the ability of our method to learn the reward function of the learning agent while it is interacting with its environment to learn its optimal policy. Specifically, we test our method with just the first $50$ bundles rather than the full set of $333$ bundles. In Fig. \ref{fig:diffbundlesizes}, we show the recovered reward function using these $50$ bundles, which compose only $750$ episodes from the forward learning. We see that with less than a sixth of the total episodes from the forward learning, we can start to recover the shape of the reward function. We conclude that, compared to regular behavior cloning that requires an optimal policy to imitate, bundle behavior cloning does not.

\begin{figure}
    \centering
\includegraphics[scale=0.40]{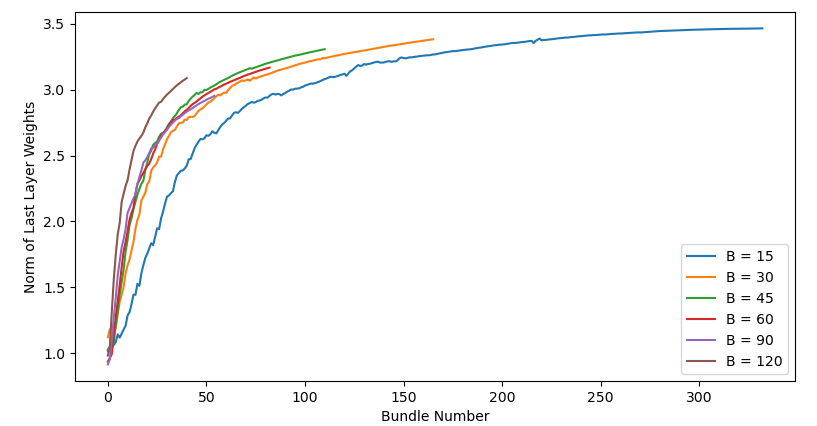}
    \caption{We see that the higher the bundle size $B$ is, the faster the last layer of weights in the cloned policies change across bundles. We've chosen $B=15$ in our experiments as a good balance between allowing enough samples for each bundle but also keeping Assumption $4$.}
    \label{fig:normslastweights}
\end{figure}

\subsection{Deviation of Cloned Policies in Between Bundles}
Finally, we empirically test the effect of the bundle choice on the performance of the proposed method. Fig. \ref{fig:normslastweights} shows the norm of the last layer weights of the cloned policies across bundles for different bundle sizes. As the bundle size gets larger, the weights of the last layer change much faster across the bundles, which may violate Assumption $4$. Although the norm of the weights of a neural network is not a substitute for the distribution of the policy itself, this graph does show that adjusting bundle size $B$ can help satisfy the theoretical requirement for $\epsilon$ in Assumption \ref{consecutivebounds} as it results in policy networks that are closer to each other across consecutive bundles.


\bibliography{iclr2023_conference}
\bibliographystyle{IEEEtran}

\end{document}